\def\tp{{}^{\top}}
\DeclareMathAlphabet{\calli}{OMS}{zplm}{m}{n}
\DeclareMathOperator{\sspan}{span}
\DeclareMathOperator{\orth}{orth}
\DeclareMathOperator{\Exp}{Exp}
\newtheorem*{proposition}{Proposition}
\begin{document}
%
% paper title
% Titles are generally capitalized except for words such as a, an, and, as,
% at, but, by, for, in, nor, of, on, or, the, to and up, which are usually
% not capitalized unless they are the first or last word of the title.
% Linebreaks \\ can be used within to get better formatting as desired.
% Do not put math or special symbols in the title.
%\title{Learning mutual capsules for image set recognition}
% \title{Discriminative learning mutual subspace method \\ for image set recognition}
\title{Grassmannian learning mutual subspace method for image set recognition}

\author{Lincon~S.~Souza,
        Naoya~Sogi,
        Bernardo~B.~Gatto,
        Takumi~Kobayashi,
        and~Kazuhiro~Fukui % <-this % stops a space
\IEEEcompsocitemizethanks{
\IEEEcompsocthanksitem L.S. Souza, T. Kobayashi and B.B. Gatto are with the National Institute of Advanced Industrial Science and Technology (AIST), Japan.
\IEEEcompsocthanksitem K. Fukui and N. Sogi are with the Department of Computer Science, Graduate School of Systems and Information Engineering, University of Tsukuba, Japan. (E-mail: lincon.souza@aist.go.jp)}% <-this % stops an unwanted space
}

% for Computer Society papers, we must declare the abstract and index terms
% PRIOR to the title within the \IEEEtitleabstractindextext IEEEtran
% command as these need to go into the title area created by \maketitle.
% As a general rule, do not put math, special symbols or citations
% in the abstract or keywords.

\IEEEtitleabstractindextext{%
\begin{abstract}
This paper addresses the problem of object recognition given a set of images as input (e.g., multiple camera sources and video frames). Convolutional neural network (CNN)-based frameworks do not exploit these sets effectively, processing a pattern as observed, not capturing the underlying feature distribution as it does not consider the variance of images in the set. To address this issue, we propose the Grassmannian learning mutual subspace method (G-LMSM), a NN layer embedded on top of CNNs as a classifier, that can process image sets more effectively and can be trained in an end-to-end manner. The image set is represented by a low-dimensional input subspace; and this input subspace is matched with reference subspaces by a similarity of their canonical angles, an interpretable and easy to compute metric. The key idea of G-LMSM is that the reference subspaces are learned as points on the Grassmann manifold, optimized with Riemannian stochastic gradient descent. This learning is stable, efficient and theoretically well-grounded. We demonstrate the effectiveness of our proposed method on hand shape recognition, face identification, and facial emotion recognition.
\end{abstract}

\begin{IEEEkeywords}
Grassmannian learning mutual subspace method, learning subspace methods, subspace learning, image recognition, deep neural networks, manifold optimization.
\end{IEEEkeywords}}

% make the title area
\maketitle

% To allow for easy dual compilation without having to reenter the
% abstract/keywords data, the \IEEEtitleabstractindextext text will
% not be used in maketitle, but will appear (i.e., to be "transported")
% here as \IEEEdisplaynontitleabstractindextext when the compsoc 
% or transmag modes are not selected <OR> if conference mode is selected 
% - because all conference papers position the abstract like regular
% papers do.
\IEEEdisplaynontitleabstractindextext
% \IEEEdisplaynontitleabstractindextext has no effect when using
% compsoc or transmag under a non-conference mode.

%%%%%%%%%%%%%%%%

\IEEEraisesectionheading{\section{Introduction}\label{sec:introduction}}

\begin{figure*}[t]
\centering
\includegraphics[width=0.9\linewidth]{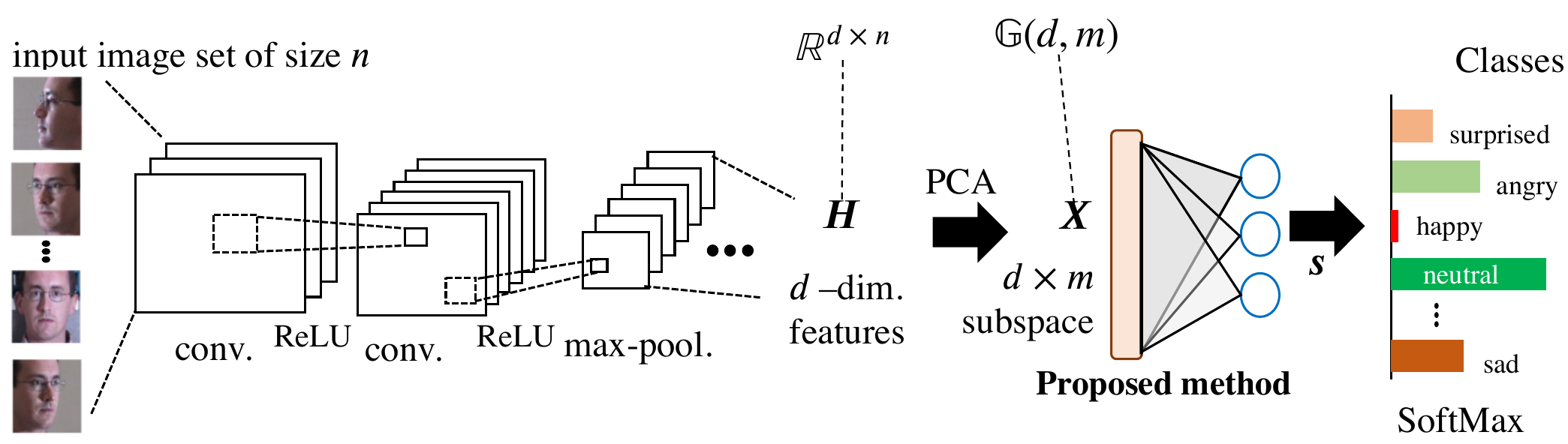}
\caption{Diagram of the proposed method embedded as a layer in an end-to-end learning framework.}
\label{fig:diagram}
\end{figure*}

\begin{figure*}[t]
\centering
\includegraphics[width=\linewidth]{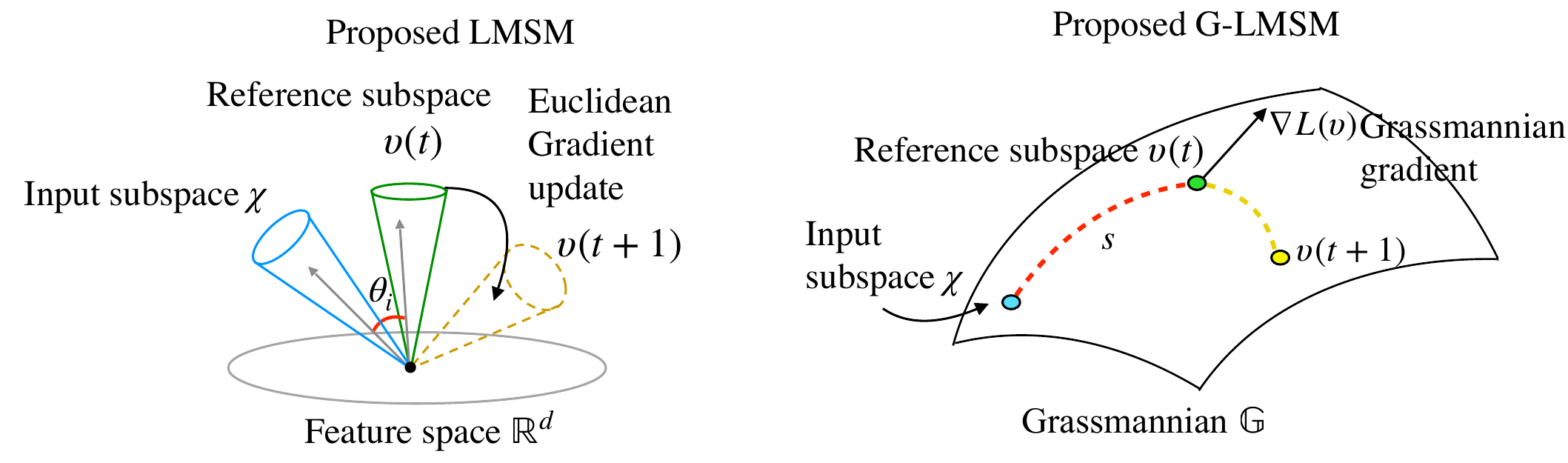}
\caption{Conceptualized depiction of the proposed LMSM and G-LMSM key idea. Both proposed methods match an input subspace $\bm \chi $ (blue) to a reference subspace $\bm{\upsilon}_j(t)$ (green) by using a similarity $s_j$ (red) based on their canonical angles. In the learning step, LMSM updates the reference subspaces to $\bm{\upsilon}_j(t+1)$ (yellow) using a gradient vector; LMSM uses a gradient on the Euclidean matrix space, and G-LMSM uses the gradient $\nabla L(\bm \upsilon_j)$ (black) on the Grassmannian $\mathbb{G}$.}

\label{fig:keyidea}
\end{figure*}

\IEEEPARstart{M}{ultiple} images of an object are useful for boosting performance of object classification~\cite{shashua_photometric_1997, belhumeur_what_1998}. In surveillance and industrial systems, distributed cameras and sensors capture data in a continuous stream to provide multiple images pointing to a target object. The task of object recognition based on a set of images, called \textit{image set recognition}, is formulated to classify an image set through comparing the input set to reference image sets (dictionary) of class objects.

The image set that we consider in this paper refers to a collection of image samples captured from the same object or event, thereby freeing images in a set from any order, such as temporal ordering. For example, image sets of our interest include multi-view images from multiple camera sources and video frames. As in the other image classification tasks, each image in the set could be represented by a naive image vector~\cite{lee2005acquiring, basri2003lambertian, kim2007discriminative} or by the more sophisticated feature vector via CNNs~\cite{sogi2022constrained, souza2020enhanced}, to provide us with a set of feature \textit{vectors} for set classification. Therefore, the primal issue to be addressed in the set recognition is how to represent the \textit{set} which is composed of variable number of feature vectors.

Subspace-based methods~\cite{watanabe1973subspace, iijima1974theory, oja1983subspace} have been one of the leading solutions for image set recognition, and a central research topic for object recognition in computer vision. In the subspace methods, a set of images is effectively modeled by a low-dimensional \textit{subspace} in a high-dimensional vector space~\cite{georghiades2001few, shashua_photometric_1997, belhumeur_what_1998, basri2003lambertian}. Such subspaces are typically generated by applying either singular value decomposition (SVD) or Gram–Schmidt orthogonalization to feature vectors in a set. The subspace-based set representation is effective in the following points. (1) A subspace exploits the geometrical structure of the probability distribution underlying an image set. (2) The subspaces are also statistically robust to input noises, e.g., occlusion in an image. (3) In addition, a subspace is a compact representation of a set; we can describe a set by fixed and low dimensional subspace no matter how many image frames are included in a set.

In the subspace-based methods, for classifying sets into class categories, an input subspace is compared to \textit{reference subspaces} by using a structural similarity based on \textit{canonical angles}~\cite{hotelling1992relations, afriat1957orthogonal} between subspaces. For example, suppose we have image sets of 3D objects, and it can be observed that under some assumptions, the structural similarity is related to 3D shape similarity~\cite{belhumeur_what_1998, lee2005acquiring, basri2003lambertian}, which enables us to implicitly compare object shapes by means of subspaces. Therefore, to further improve performance of subspace-based set classification, there are two directions regarding optimization of feature space on which the structural similarity is measured and that of reference subspace to describe dictionary samples.

For improving the feature space, some methods are proposed in the subspace literature, such as constrained mutual subspace method (CMSM)~\cite{fukui2015difference, yamaguchi2003smartface} and orthogonal MSM~\cite{fukui2007kernel}, by exploring \textit{linear projection} from the original feature space to the space that is contributive to discriminatively measuring the structural similarities. Though the linear projection is extended to non-linear one via kernel tricks~\cite{sakano2000kernel, hamm2008grassmann, hamm2009extended}, the space that those methods produce is highly dependent on the input feature space in which image frames are originally represented. As to optimization of reference subspaces, the seminal work is performed by Kohonen and Oja~\cite{kohonen1979spectral, oja1983subspace} to formulate the learning subspace methods (LSM). In the LSM~\cite{kohonen1979spectral}, a class reference subspace is iteratively improved by means of rotation based on the statistical decision-theoretic rule, beyond the simple SVD which computes reference subspace in a generative way. LSM is extended to averaged LSM (ALSM)~\cite{oja_alsm_1983} toward learning stability by considering iterative updating based on \textit{mini-batch}. These two research directions are made in the subspace literature rather separately, while they could complement each other.

In this paper, we formulate a new set-classification framework (Fig.~\ref{fig:diagram}) to learn \textit{both} the reference subspaces and the feature space (representation) in which the set subspaces are extracted. It contains a key module which we propose to learn reference subspaces and is conceptualized in Fig.~\ref{fig:keyidea}. As in previous mutual subspace method, an input subspace is classified based on matching to class reference subspace by using canonical angle, a structural similarity. It should be noted that the reference subspaces are learned by means of the gradient descent and thereby it is possible to embed the learning subspace functionality into an end-to-end learning scheme to simultaneously learn image feature representation beyond the naive linear projection of off-the-shelf features~\cite{fukui2015difference, hamm2009extended}. %[CMSM,GMSM]

From the viewpoint of learning subspace, the proposed method is connected to Oja’s LSM. There, however, are several key differences between those two methods. The LSM improves a reference subspace in an instance-based manner by considering vector-to-subspace classification, while our method learns subspace based on subspace-to-subspace comparison for set classification. Besides, in contrast to the heuristic updating rule in LSM, we apply gradient-based update to optimize the reference subspaces in a theoretical way.

Vector-to-subspace comparison in LSM is simply defined in Euclidean feature space. On the other hand, subspace-to-subspace relationships are naturally formulated in Grassmann manifold~\cite{absil2004riemannian, borisenko1991grassmann, fujii2002introduction}, a non-linear Riemannian manifold beyond the Euclidean space, thereby imposing difficulty on the gradient-based updating. While the gradient-based optimization has been usually applied in Euclidean parameter space such as by SGD, we establish the subspace updating formula by applying Riemannian stochastic gradient descent (RSGD)~\cite{ganea2018riemannian, bonnabel2013stochastic} to naturally optimize a subspace along a geodesic path towards (local) minima. It endows the learning subspace with an elegant geometrical structure of subspace as well as a clear and strong theoretical background and effective optimization; our method directly updates subspace representation without ad-hoc post-processing such as SVD to ensure orthonormality of the subspace basis which is time-consuming and degrades optimality of the updating. 

The proposed method to learn subspaces can thus be plugged into an end-to-end learning framework which optimizes neural networks through back-propagation, as shown in Fig.~\ref{fig:diagram}. As a result, the method contributes to learn both reference subspaces and feature representation in which the subspaces are extracted. The proposed set classification is formulated in such a general form that (1) it can cope with pre-designed (off-the-shelf) features to only learn reference subspaces and (2) can be embedded in an \textit{intermediate} layer to represent an input set by a fixed-dimensional vector as set representation learning; from that viewpoint, the above-mentioned set classification (Fig.~\ref{fig:diagram}) is regarded as a set-based classification layer whose counterpart in a standard CNN is implemented in fully-connected (FC) layer.

Our contributions are summarized as follows.
\begin{itemize}
    \item We propose two new subspace-based methods named learning mutual subspace method (LMSM) and Grassmannian learning mutual subspace method (G-LMSM). They match subspaces and learns reference subspaces: LMSM learns on the Euclidean space, whereas G-LMSM learns on the Grassmannian.
  	\item We combine LMSM and G-LMSM with CNNs, by employing CNN as backbones and our methods as a classifier, yielding a straightforward yet powerful model for image set recognition.
  	We showcase this combination by training models in an end-to-end manner and applying them to the problems of hand shape, face recognition, and face expression recognition from video data.
  	\item We also propose variations of the : (1) the repulsion loss to maximize the G-LMSM representativity; (2) the square root as an activation function to reprimand overconfidence; (3) fully-connected layer after a G-LMSM layer to represent more abstract objects and (4) temperature softmax to control model confidence and unbound the logit similarity values.
\end{itemize}

 \section{Related works}
  
%The manuscript lacks a review of the related works; it touches only ALSM and CapPro.
%
%One would like to know how the literature of set-based classification and subspace-related (maybe, grassmanian) methods advances recently. 

In this section, we briefly review the recent literature of image set recognition and subspace-based learning.

\subsection{Image set recognition}

The task of image set recognition aims to recognize an object from an image set, classifying the set in one of a finite amount of defined categories. It has attracted extensive attention of research community~\cite{wang2018settoset,wang2017discriminative,lu2015multi,feng2016pairwise,wang2017discriminative}, and various applications have found success in this paradigm, such as face recognition~\cite{naseem2010linear, stallkamp2007video-based, zhou2002probabilistic} and object recognition~\cite{uijlings2013selective,li2009boosting,jarrett2009best}.

Zhao et al.~\cite{zhao2019review} categorized the methods that approach image set recognition into parametric and non-parametric models. Parametric models, e.g. manifold density divergence~\cite{arandjelovic2005face}, model sets and compare them by probability distributions and their divergences. Non-parametric models, however, do not assume a specific distribution and instead model the sets by a geometric object. The non-parametric models can be divided into: (1) linear subspace methods, such as the mutual subspace method (MSM)~\cite{maeda2004towards}, discriminant correlation analysis (DCC)~\cite{kim2009canonical}, constrained MSM (CMSM)~\cite{fukui2015difference}. (2) nonlinear manifold methods, such as the manifold-manifold distance (MMD)~\cite{wang2012manifold} and manifold discriminant analysis (MDA)~\cite{wang2009manifold}; and (3) affine subspace methods, e.g. affine hull based image set distance (AHISD)~\cite{cevikalp2010face} and sparse approximated nearest point (SANP)~\cite{hu2012face, hu2011sparse}.

\subsection{Subspace-based learning}

As seen in the previous section, linear subspaces can be employed as models for image sets. Although subspace-based learning intersects with the area of image set recognition, it also intersects a wide range of other fields, as many problems involve some kind of subspace structure, orthogonality or low-rank constraints, or subspace distances (e.g., canonical angles or projection norms). In most cases, the problems mathematical characteristics are expressed naturally using the Grassmann manifold. In the last few years, there have been growing interest in studying the Grassmann manifold to tackle new learning problems in computer vision, natural language processing, wireless communications, and statistical learning.
Some of the recent approaches of Grassmannian learning involve metric learning, Grassmann kernels, dictionary learning and capsules. We introduce some of these works below.

\textbf{Dictionary learning: }
Harandi et al.~\cite{harandi2013dictionary} proposes an approach based on sparse dictionary learning to learn linear subspaces, exploiting the Grassmann geometry to update the dictionary and handle non-linearity. Experiments on various classification tasks such as face recognition, action recognition, dynamic texture classification display advantages in discrimination accuracy compared to related methods.

%2 - Liao, M. & Gu, X. "Face recognition based on dictionary learning and subspace learning". Digit Signal Process 90, 110–124 (2019).

Another method for face recognition based on dictionary learning and subspace learning (DLSL) is introduced in~\cite{liao_face_2019}. This new approach efficiently handles corrupted data, including noise or face variations (e.g., occlusion and significant pose variation). The DLSL uses a new subspace learning algorithm with sparse and low-rank constraints. Results obtained through experiments on FRGC, LFW, CVL, Yale B, and AR face databases reveal that DLSL achieves better performance than many state-of-the-art algorithms.

%Grassmannian kernels

%1 - Grassmann Discriminant Analysis: a Unifying View on Subspace-Based Learning 

\textbf{Grassmannian kernels: }
Encouraged by the advantages of linear subspace representation, a discriminant learning framework has been proposed by Hamm et al.~\cite{hamm2008grassmann, hamm2008thesis}. In this method, Various Grassmann kernel functions are developed that can map a subspace to a vector in a kernel space isometric to the Grassmannian. The subspaces are handled as a point in this kernel space through the use of a kernel trick,  allowing feature extraction and classification. In the paper, kernel discriminant analysis is applied to the subspaces, a method called Grassmann discriminant analysis (GDA). In addition, experimental results on diverse datasets confirm that the proposed method provides competitive performance compared with state-of-the-art solutions.

%2 - Graph embedding discriminant analysis on Grassmannian manifolds for improved image set matching

In~\cite{harandi2011graph}, a Grassmannian kernel based on the canonical correlation between subspaces is proposed, improving the discrimination accuracy by estimating the local structure of the learning sets. More precisely, within-class and between-class similarity graphs are generated to describe intra-class compactness and inter-class separability. Experimental results obtained on PIE, BANCA, MoBo, and ETH-80 datasets confirm that the discriminant approach improves the accuracy compared to current methods.

%3 - 3D Object Recognition with Enhanced Grassmann Discriminant Analysis 

Another kernel-based method~\cite{lincon2016egda} argues that Grassmann discriminant analysis (GDA) has a decrease in its discriminant ability when there are large overlaps between the subspaces. The enhanced GDA is proposed to resolve this issue, where class subspaces are projected onto a generalized difference subspace (GDS) before mapping them onto the manifold. In general terms, GDS removes the overlapping between the class subspaces, assisting the feature extraction and image set classification conducted by GDA. Hand shape and CMU face databases are employed to show the advantages of the proposed eGDA in terms of classification accuracy.

%Metric learning

%1 - Pengfei Zhu et al., Towards Generalized and Efficient Metric Learning on Riemannian Manifold

\textbf{Metric learning: }
Zhu et al.~\cite{zhu_towards_2018} employs nonlinear Riemannian manifolds for representing data as points, a practice widely adopted in the computer vision community. This work proposes a generalized and efficient Riemannian manifold metric learning (RMML), a general metric learning technique that can be applied to a large class of nonlinear manifolds. The RMML optimization process minimizes the geodesic distance of similar points and maximizes the geodesic distance of dissimilar ones on nonlinear manifolds. As a result, this procedure produces a closed-form solution and high efficiency. Experiments were performed using several computer vision tasks. Experimental results The experimental results show that RMML outperforms related methods in terms of accuracy.

%2 - Naoya Sogi, et al. "Metric Learning with A-based Scalar Product for Image-set Recognition", CVPRW 2020 DiffCVML.
%https://openaccess.thecvf.com/content_CVPRW_2020/papers/w50/Sogi_Metric_Learning_With_A-Based_Scalar_Product_for_Image-Set_Recognition_CVPRW_2020_paper.pdf

Sogi et al.~\cite{sogi2020metric} propose a metric learning approach for image set recognition. The main task is to provide a reliable metric for subspace representation. The objective of the proposed metric learning approach is to learn a general scalar product space that provides improved canonical angles to estimate the similarity between subspaces. In addition, they present a formulation for dimensionality reduction by imposing a low-rank constraint. Experimental results are provided using video-based face recognition, multi-view object recognition, and action recognition datasets.

%3 - Luo, L., Xu, J., Deng, C. & Huang, H. Robust Metric Learning on Grassmann Manifolds with Generalization Guarantees. Proc Aaai Conf Artif Intell 33, 4480–4487 (2019).

Luo et al.~\cite{luo_robust_2019} addresses the problem that current metric learning solutions employ the L2-norm to estimate the similarity between data points. The authors argue that although practical, this approach may fail when handling noisy data. Then, they introduce a robust formulation of metric learning to solve this problem, where a low-dimensional space is utilized for producing a suitable error estimation. In addition to a robust framework, the authors present generalization guarantees by developing the generalization bounds employing the U-statistics. The proposed framework is evaluated using six benchmark datasets, obtaining high discrimination accuracy compared to related methods.

\textbf{Capsules: }
Capsule projection (CapPro), proposed recently by~\cite{zhang2018cappronet}, is a neural network layer constructed by subspace capsules. In CapPro subspaces have been incorporated in neural networks, concretely as parameters of capsules. The general idea of capsule was first proposed by~\cite{hinton2011transforming, sabour2017dynamic}, and since then, much effort has been made to seek more effective capsule structures as the base for the next generation of deep network architectures. Capsules are groups of neurons that can learn an object's variations, such as deformations and poses effectively. CapPro obtains an output vector by performing an orthogonal projection of feature vectors $\bm x$ onto the capsule subspace. The intuition behind CapPro is that it learns a set of such capsule subspaces, each representing a class. Note that CapPro cannot handle a set in a natural manner, and it does not perform subspace matching.
Our proposed models can be seen as extensions of CapPro, in the sense that their forward map correspond to CapPro in the case the input consists of a single vector $\bm x \in \mathbb{R}^{d}$.

%%%%%%%%%%%%%%%%%%%%%%%%%%%%%%%%%%%
\section{A review of learning subspaces} \label{sec:review}

In this section, we review the theory and algorithms of classic learning subspace methods.

\subsection{Problem formulation}

The learning subspace methods are Bayesian classifiers. The task they solve is defined as follows: Let $\{ \bm x_i , y_i \}_{i=1}^{N}$ be a set of $N$ reference image feature vectors with $d$ variables $\bm x_i$, paired with respective labels $y_i~\in~\{1,\cdots,C\}$. We want to learn a set of class subspaces, each represented by orthogonal basis matrices $\{ \bm V_c \}_{c=1}^{C}$, that correctly predicts the class of a novel vector $\bm x \in \mathbb{R}^d$.

\subsection{Subspace method}

The subspace method (SM)~\cite{watanabe1967evaluation, iijima1972theoretical} is a generative method that represents classes by subspaces. Let $\{ \bm V_c \}_{c=1}^{C}$ be the bases of $m$-dimensional class subspaces. In SM, the bases are obtained from principal component analysis (PCA) without centering of each class reference data, i.e., each class subspace is computed independently by an eigenvalue decomposition of the class auto-correlation matrix $\bm A$:
\begin{gather}
    \sum_{y_i \in c} \bm x_i \bm x_i\tp = \bm A =  \bm U \bm \Lambda \bm U \tp, \\
    \bm V_c = \bm U_{1:m}.
\end{gather}
Since $\bm A$ is symmetric positive semidefinite, it has eigenvectors $\bm{u}_{1}, \ldots, \bm{u}_{r}$, and corresponding real eigenvalues $\lambda_{1}, \ldots, \lambda_{r}$, where $r = \operatorname{rank} \bm A$ and $m \leq r$. 
Without loss of generality we can assume $\lambda_{1} \geq \lambda_{2} \cdots \geq \lambda_{r}$. 
Here,  $\bm U = [\bm{u}_{1}, \bm{u}_{2}, \cdots, \bm{u}_{r}]$ is the matrix of eigenvectors and $\bm \Lambda \in \mathbb{R}^{r \times r}$ is a diagonal matrix where each $k$-th diagonal entry is an eigenvalue $\lambda_k$. $\bm U_{1:m}$ represents the first $m$ eigenvectors of $\bm A$. The subspace dimension $m$ is selected as a hyperparameter. 

\subsection{Learning subspace method}

The learning subspace method (LSM), proposed by Kohonen~\cite{kohonen1979spectral} introduces concepts of learning to SM by learning the class subspaces with an iterative algorithm. To initialize, the subspaces are computed as in SM. Then, the reference samples are classified iteratively. For a training sample $\bm x$ of class $y$, let the LSM classifier prediction be written as $q~\in~\{1,\cdots,C\}$. The prediction $q$ is obtained as:
\begin{equation}
    q = \text{argmax}_{c} \| \bm V_c^\top \bm x\|.
    \label{eq:lsm_classification}
\end{equation}
When a training sample is correctly classified, i.e., $q = y$, the subspace representation is \emph{reinforced} by rotating the class subspace $\bm V_y$ slightly towards the sample vector. Let $t$ denote the current iteration and $\alpha$ a learning rate, then:
% \begin{equation} 
%      \bm V_c(t+1) & = (\bm I + \alpha E[\bm x \bm x^\tp | c =q] p(c=q) \\
%     & + \beta \sum_{c\neq q} E[\bm x \bm x^\tp | c \neq q] p(c \neq q) \\
%     & - \gamma \sum_{q\neq c} E[\bm x \bm x^\tp | q \neq c] p(q \neq c)
%     )\bm V_c(t).
% \end{equation}
\begin{equation}
     \bm V_y(t+1) = (\bm I + \alpha \bm x \bm x^\top ) \bm V_y(t) .
\end{equation}
When a sample is misclassified as being of some other class, i.e., $q \neq y$, the misclassified class subspace $\bm V_q$ is \emph{punished} by rotating it away from the sample vector, while the class subspace $\bm V_y$ is rotated towards the sample vector so that it is correctly classified. Let $\beta, \gamma$ be learning rates, then:
\begin{gather} 
     \bm V_y(t+1) = (\bm I + \beta \bm x \bm x^\top )\bm V_y(t) ,  \\
    \bm V_q(t+1) = (\bm I - \gamma \bm x \bm x^\top )\bm V_q(t)  .
\end{gather}

% When a training sample is correctly classified as class $c$, the subspace representation is \emph{reinforced} by rotating the class subspace $\bm V_c$ slightly towards the sample vector. When a sample is misclassified as being of class $q$, the misclassified class subspace $\bm V_q$ is \emph{punished} by rotating it away from the sample vector, while the class subspace $\bm V_c$ is rotated towards the sample vector so that it is correctly classified. Note that for a sample $\bm x$, the predicted class $q$ is obtained by $q = \text{argmax}_{c} \| \bm V_c^\tp \bm x\|$.

\subsection{Average learning subspace method}

This subspace updating procedure is simple, yet theoretically sound and clearly superior to a priori methods such as SM and the Bayes classifier with normality assumptions up to a certain size~\cite{oja1983subspace}. Yet, LSM is sensitive to the order of samples in iterations; the representation induced by the first samples tends to be canceled by subsequent samples, making the subspace wander around instead of steadily move towards local optima. To obtain smoother behavior and faster convergence, the averaged learning subspace method (ALSM) has been proposed, which considers the update based on the expectation of samples rather than each sample individually.
% wander (meander) around, zigzag, drift
% The subspace, as a structure induced by infinitely many linear combinations, has a vast potential for generalization, working well with

The ALSM update, also referred in this paper as Oja's update, assumes that the sampling order of the training data is statistically independent. For a class subspace $c$, let the three potential statistical outcomes be denoted as follows. $H_{\text{CC}}$ stands for the event of a \textit{correct classification} into class $c$ i.e. $c = q = y$. Then, there are two cases of misclassification ($q \neq y$): $H_{\text{FN}}$ denotes a false negative ($c = y$), and $H_{\text{FP}}$ denotes a false positive ($c = q$).
% $p(c = q)$ denote the probability of a correct prediction, and let $t$ denote the current iteration.
Then, the ALSM update is given for a class subspace $c$ as follows:
% \begin{equation}
% \begin{aligned}
%   \bm V_c(t+1) & = (\bm I + \alpha E[\bm x \bm x^\tp | c =q] p(c=q) \\
%     & + \beta \sum_{c\neq q} E[\bm x \bm x^\tp | c \neq q] p(c \neq q) \\
%     & - \gamma \sum_{q\neq c} E[\bm x \bm x^\tp | q \neq c] p(q \neq c)
%     )\bm V_c(t).
% \end{aligned}
% \end{equation}
\begin{equation}
\begin{aligned}
   \bm V_c(t+1) & = (\bm I + \alpha E[\bm x_i \bm x_i^{\top} | H_{\text{CC}} ] p(H_{\text{CC}} ) \\
    & + \beta \sum_{c = y_i \neq q} E[\bm x_i \bm x_i^{\top} | H_{\text{FN}}] p(H_{\text{FN}}) \\
    & - \gamma \sum_{c = q\neq y_i} E[\bm x_i \bm x_i^{\top} | H_{\text{FP}}] p(H_{\text{FP}})
    )\bm V_c(t).
    \label{eq:ojaupdate}
\end{aligned}
\end{equation}

That is, the expected updated subspace $\bm V_c(t+1)$ given the previous subspace $\bm V_c(t)$ is a rotation of $\bm V_c(t)$. This rotation is a weighted average of three cases: (1) the rotation to \emph{reinforce} the correctly classified cases ($c=q$); (2) to \emph{punish} false negatives ($c \neq q$ a sample of class $c$ misclassified as $q$); (3) to \emph{punish} false positives ($q \neq c$ a sample of class $q$ misclassified as $c$). Note that $\bm V_c(t+1)$ will not necessarily be an orthogonal basis matrix, requiring Gram-Schmidt orthogonalization. We omit this from notation, being implicitly necessary for all Oja's updates.

%%%%%%%%%%%%%%%%%%%%%%%%%%%%%%%%%%%
\section{Proposed methods}

In this section, we describe the algorithm of the proposed LMSM and G-LMSM.

\subsection{Basic idea}

The ALSM has two major shortcomings that makes it unfit for image set recognition: (1) their input can only be a single vector, rather than a set of vectors, and (2) they update the reference subspaces with the Euclidean gradient.
Therefore, in the following we propose two methods to address these problems by: (1) matching subspace to subspace and (2) performing updates with the Grassmannian gradient.

In this section, we generalize Oja's learning subspace framework to an image set setting, where now one sample represents one image set. Concretely, we replace a sample vector $\bm x \in \mathbb{R}^{d}$ by a subspace spanning an image set $\bm X \in \mathbb{R}^{d \times m}$ with $m$ basis vectors, in the same manner as realized by MSM. We call this algorithm the learning mutual subspace method (LMSM).
The "mutual" word refers to the introduction of a matching between two subspaces based on their canonical angles. Such a method would be a natural extension of the learning subspace methods, but to the best of our knowledge, has not been proposed, most likely because finding appropriate ALSM's parameters $\alpha, \beta$, and $\gamma$ would become increasingly exhaustive under a heuristic subspace matching paradigm. Instead, we rewrite Oja's rule as a gradient update so that LMSM can exploit the backpropagation algorithm to enjoy learning stability similar to neural networks.

The main differences between ALSM and LMSM are:
(1) LMSM performs subspace matching, whereas ALSM performs vector matching;
And (2) ALSM learns its reference subspaces by performing heuristic parameter updates, while the LMSM performs explicit gradient updates.

%unlike ALSM, LMSM is differentiable.

Then we propose the Grassmann learning subspace method (G-LMSM), further generalizing ALSM and LMSM. Unlike them, G-LMSM updates the gradient through the Riemannian SGD, which smoothly enforces the subspace constraint by correcting the Euclidean gradient to a Grassmannian gradient, keeping the reference subspace bases orthogonal. In contrast, the previous methods recompute the bases based on the Gram-Schmidt orthogonalization, which can lead to unstable convergence.

\subsection{Oja's rule reformulation}

Before describing the proposed methods, we reformulate Oja's rule as a gradient update.
Now we turn our attention to Oja's update (equation~\ref{eq:ojaupdate}). In our formulation, we assume that all learning rates are equal; as a result, equation~\ref{eq:ojaupdate} can be condensed as follows:
\begin{align}
    \bm V_c(t+1) &= (\bm I + \alpha \sum_{i=1}^{N} \iota(c,q_i,y_i) \bm x \bm x\tp)\bm V_c(t) \\
    &= \bm V_c(t) + \alpha \sum_{i=1}^{N} \iota(c,q_i,y_i) \bm x \bm x^{\top} \bm V_c(t),
\end{align}
where $\iota(c,q,y)$ is the \emph{indicator} function for ALSM: for false negatives and correct classification it outputs $+1$ and for false positives it indicates $-1$, otherwise it indicates $0$.

Without loss of generality, assume a sample of size $1$:
\begin{align}
    \bm V_c(t+1) = \bm V_c(t) + \alpha \iota(c,q,y) \bm x \bm x\tp \bm V_c(t).
    \label{eq:mlsmupdate}
\end{align}
Let the indicator function of the Oja's update be the gradient of a \emph{loss} function $L$, i.e., $\frac{\dd L}{\dd s_c} = \iota(c,q,y)$. From the definition of $\iota$, it is possible to determine that $L$ corresponds to the common cross-entropy loss function plus an extra term. Throughout this paper, we use $L$ to be the cross-entropy.

We prove that the remaining term $\bm x \bm x\tp \bm V_c(t)$ is the derivative of the squared vector projection onto the subspace $ \bm V_c$ (up to scaling), the similarity used by ALSM to match subspace and vector on the learning subspaces classification (eq.~\ref{eq:lsm_classification}).
%\begin{proposition}
%	$\frac{\dd s_c}{\dd \bm V_c} = \bm x \bm x\tp \bm V_c(t)$ for $s_c = \| \bm V_c\tp \bm x  \|_2$ 
%\end{proposition}
\begin{proposition}
	$\frac{\dd s}{\dd \bm V} = 2\bm x \bm x\tp \bm V$ for $s = \| \bm V\tp \bm x  \|_2^2$.
\end{proposition}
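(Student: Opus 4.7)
The plan is to reduce the claim to a standard matrix-calculus identity by first rewriting $s$ as a trace. Expanding the squared norm gives
\begin{equation*}
s = \|\bm V\tp \bm x\|_2^2 = \bm x\tp \bm V \bm V\tp \bm x = \operatorname{tr}(\bm V\tp \bm x \bm x\tp \bm V),
\end{equation*}
which puts $s$ in the form $\operatorname{tr}(\bm V\tp \bm A \bm V)$ with $\bm A = \bm x\bm x\tp$. The standard identity $\frac{\dd}{\dd \bm V}\operatorname{tr}(\bm V\tp \bm A \bm V) = (\bm A + \bm A\tp)\bm V$, combined with the symmetry of $\bm A$, then yields $\frac{\dd s}{\dd \bm V} = 2\bm x\bm x\tp \bm V$ directly.

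As a cross-check (and to avoid any sloppiness about layout conventions), I would verify the identity entry-wise. Writing $s = \sum_k \bigl(\sum_i V_{ik} x_i\bigr)^2$ and differentiating with respect to $V_{jl}$ gives $\partial s / \partial V_{jl} = 2\bigl(\sum_i V_{il} x_i\bigr) x_j = 2 x_j (\bm V\tp \bm x)_l = 2 (\bm x \bm x\tp \bm V)_{jl}$, which matches the claim entry by entry.

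The main obstacle here is really only bookkeeping: one must fix the layout convention for $\dd/\dd \bm V$ (numerator vs.\ denominator), and be careful that the identity for $\operatorname{tr}(\bm V\tp \bm A \bm V)$ is stated for general $\bm A$ rather than already-symmetrized $\bm A$. Once those are pinned down, the result follows in one or two lines, which is why the entry-wise calculation is a useful sanity check. No deeper analysis (e.g.\ a chain rule through the Grassmannian or any orthogonality constraint on $\bm V$) is needed: the derivative is taken with $\bm V$ treated as an unconstrained matrix variable, and the Riemannian correction enters only later when this Euclidean gradient is projected to the Grassmann tangent space.
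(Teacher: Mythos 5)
Your proof is correct, but it takes a genuinely different route from the paper's. The paper differentiates the squared norm by the chain rule through the unsquared norm, writing $\frac{\dd s}{\dd \bm V} = 2\,\| \bm V\tp \bm x \|_2 \,\frac{\dd}{\dd \bm V}\| \bm V\tp \bm x \|_2$ and then substituting $\frac{\dd}{\dd \bm V}\| \bm V\tp \bm x \|_2 = \| \bm V\tp \bm x \|_2^{-1} \bm x \bm x\tp \bm V$ so that the norm factors cancel. Your trace reformulation $s = \tr(\bm V\tp \bm x \bm x\tp \bm V)$ combined with the identity $\frac{\dd}{\dd \bm V}\tr(\bm V\tp \bm A \bm V) = (\bm A + \bm A\tp)\bm V$ reaches the same answer, and arguably more cleanly: the paper's intermediate step divides by $\| \bm V\tp \bm x \|_2$, which is undefined when $\bm V\tp \bm x = \bm 0$ (where the unsquared norm is not differentiable), whereas the quadratic form $\bm x\tp \bm V \bm V\tp \bm x$ is a polynomial in the entries of $\bm V$ and your entry-wise computation covers every case uniformly. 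Moreover, the derivative of the unsquared norm that the paper plugs in is essentially the statement being proved in disguise, so your argument is the more self-contained of the two. Your closing remark is also consistent with the paper: $\bm V$ is treated as an unconstrained Euclidean variable here, and the projection to the Grassmann tangent space is applied only later in the RSGD update.
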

%\begin{proof}
%	And the gradient of the projection can be obtained as:
%\begin{align}
%	\frac{\dd s_c}{\dd \bm V_c} &= \frac{\dd }{\dd \bm V_c} \| \bm V_c\tp \bm x  \|_2 \\
%	&= \frac{\dd }{\dd \bm V_c} (  \bm x\tp \bm V_c \bm V_c\tp \bm x  )^{\frac{1}{2}} \\
%	&= \frac{1}{2} (  \bm x\tp \bm V_c \bm V_c\tp \bm x  )^{-\frac{1}{2}} 2 \bm x \bm x\tp \bm V_c \\
%	&= \| \bm V_c\tp \bm x  \|_2^{-1} \bm x \bm x\tp \bm V_c \\
%	&=  w \bm x \bm x\tp \bm V_c,
%\end{align}
%where $w = \| \bm V_c\tp \bm x  \|_2^{-1}$. 
%%where $w = \frac{1}{2}(  \bm x\tp \bm V_c \bm V_c\tp \bm x  )^{-\frac{1}{2}}$. 
%\end{proof}
\begin{proof}
	\begin{align}
	\frac{\dd s}{\dd \bm V} &= \frac{\dd }{\dd \bm V} \| \bm V\tp \bm x  \|_2^2 \\
	&= 2 \| \bm V\tp \bm x  \|_2 \frac{\dd }{\dd \bm V} \| \bm V\tp \bm x  \|_2 \\
    &= 2 \| \bm V\tp \bm x  \|_2 \| \bm V\tp \bm x  \|_2^{-1} \bm x \bm x\tp \bm V \\
    &=  2 \bm x \bm x\tp \bm V.
\end{align}
\end{proof}
By using this result and inverting the signal of $\alpha$, we rewrite Oja's rule as an usual gradient update:
\begin{align}
	\bm V_c(t+1) &= \bm V_c(t) - \alpha \frac{\dd L}{\dd s_c} \frac{\dd s_c}{\dd \bm V_c}.
	    \label{eq:CapProisALSM}
\end{align}
This gradient form of the Oja's rule is the basis to develop LMSM and G-LMSM learning methods. The reasoning behind this reformulation is that many modern learning algorithms used for pattern recognition, especially neural networks, are trained with gradient descent methods. By using a gradient update, the methods we will develop in the next subsections can be flexibly employed within modern neural network frameworks in a plugin manner, without the need for reconfiguring the learning strategy. 

\subsection{Problem formulation}

Our learning problem is defined as follows: let $\calli H = \{ \calli I_l \}_{l=1}^{n}$ denote a set of $n$ images $\calli I_l$ of size $w \times h \times c$, where $w$ is width, $h$ is height and $c$ denotes the number of channels. Given training sets $\{ \calli H_i , y_i \}_{i=1}^{N}$, paired with respective labels $y_i$, we want to learn a model that correctly predicts the class of a novel set $\calli H$. Note that the number of images $n$ in a set does not need be the same among all sets. 
In this section, we consider the processing of a single instance $\calli H$ to keep the notation simple. %\cLin{Do we need the following explanation at all? I just mean to say a network  batch indices have been omitted as it is just repeating forward/backward} For training, the same process can be applied repeatedly to train with batches using all $K$ training subspaces. 
% \cBer{What, in practice, happens when the set has only one pattern? Is the framework able to learn or classify in this scenario?}

% \subsection{Subspace representation}

In our framework, we model a set  $\calli H$ by means of a subspace $\bm \chi \in \mathbb{G}(d,m)$, where $\mathbb{G}(d,m)$ denotes the Grassmann manifold (or Grassmannian) of $m$-dimensional linear subspaces in $\mathbb{R}^{d}$. The Grassmannian is an $m(d-m)$-dimensional manifold, where a point corresponds to a $m$-dimensional subspace. $\bm \chi$ is spanned by the orthogonal basis matrix  $\bm{X} \in \mathbb{S}(d,m)$, where $\mathbb{S}(d,m)$ denotes the set of orthonormal matrices of shape $\mathbb{R}^{d \times m}$, called the compact Stiefel manifold. To compute $\bm{X}$, we utilize noncentered PCA on the vectorized images or their CNN features, more being explained on Section~\ref{sec:netarch}, where all framework parts are put together.

\subsection{Proposed learning methods}

In this subsection, we explain the matching framework of the LMSM and G-LMSM.
Our subspace matching is defined as the map $s_{\bm{\upsilon}}: \mathbb{G}(d,m) \rightarrow \mathbb{R}$ from the Grassmann manifold to the reals. $s_{\bm{\upsilon}}$ is parameterized by the \textit{reference subspace} $\bm{\upsilon} \in \mathbb{G}(d,p)$ spanned by a matrix $ \bm{V} \in \calli{S}(d,p)$. Given an \textit{input subspace} $\bm{\chi} \in \mathbb{G}(d,m)$,  spanned by an orthogonal basis matrix  $\bm{X} \in \mathbb{S}(d,m)$, $s_{\bm{\upsilon}}(\bm{\chi})$ is defined as the sum of the squared cosines of the canonical angles $\theta_i$ between $\bm{\chi}$ and $\bm{\upsilon}$:
\begin{equation}
    s_{\bm{\upsilon}} = \sum_{i=1}^{r} \cos^2 \theta_i(\bm{\chi}, \bm{\upsilon}),
    \label{eq:G-LMSMdef}
\end{equation}
where $r = \min(p,m)$. The canonical angles $\{0\leq \theta_1,\cdots,\theta_{r}\leq\frac{\pi}{2}\}$ between $\bm{\chi}$ and $\bm{\upsilon}$ are recursively defined as follows~\cite{hotelling1992relations, afriat1957orthogonal}.
%\citep{canangles1,canangles2}:
\begin{eqnarray}
\cos{\theta_i}=\max_{{\bf u}\in {\bm{\chi}}}\max_{{\bf w}\in {\bm{\upsilon}}}{\bf u}^\mathrm{T}{\bf w}={\bf u}_i^\mathrm{T}{\bf w}_i, \\
s.t.\ \|{\bf u}_i\|_2=\|{\bf w}_i\|_2=1,{\bf u}_i^\mathrm{T}{\bf u}_j={\bf w}_i^\mathrm{T}{\bf w}_j=0,i\neq j, \nonumber
\end{eqnarray}
where ${\bf u}_i$ and ${\bf w}_i$ are the canonical vectors forming the $i$-th smallest canonical angle $\theta_i$ between $\bm{\chi}$ and $\bm{\upsilon}$. The $j$-th canonical angle $\theta_j$ is the smallest angle in the direction orthogonal to the canonical angles $\{\theta_k\}_{k=1}^{j-1}$.

This optimization problem can be solved from the orthogonal basis matrices of subspaces $\bm{\chi}$ and $\bm{\upsilon}$, i.e.,
$\cos^2 \theta_i$ can be obtained as the $i$-th largest singular value of $\bm{X}\tp\bm{V}$~\cite{edelman1998geometry}.
%\citep{canangles1, canangles2}.
From this approach, our output $s_{\bm{\upsilon}}$ can be obtained as the sum of the squared singular values of $\bm{X}\tp\bm{V}$, written as $\sum_{i=1}^{r} \delta_i^2(\bm{X}\tp\bm{V})$.
% (1) it requires computing singular values, a costly operation;
% Taking this approach to computing $s_{\bm{\upsilon}}$ has two disadvantages in our context: (1) it requires computing singular values, which are often solved by iterative methods with imprecise outputs. When repeated through several epochs, it can propagate error and deteriorate performance; and (2) as we are defining an end-to-end map, the gradients would be hard to compute and costly to update. To proceed here, we take advantage of our definition: we only need to compute a sum of canonical angle functions, that is, we do not need the angles themselves. With this in mind, we exploit the spectral definition of the Frobenius norm to simplify our operation, redefining the mutual capsule as:
We utilize the spectral definition of the Frobenius norm to simplify our operation, redefining the subspace matching as:
\begin{equation}
    s = \sum_{i=1}^{r} \delta_i^2(\bm{X}\tp\bm{V}) = \| \bm{X}\tp\bm{V}\|_F^2 = \tr \bm{V}\tp \bm{X} \bm{X}\tp \bm{V}.
    \label{eq:frobdef}
\end{equation}
As such, in our framework, we compute the matching with this matrix trace. Note that we omit the subscript from $s_{\bm{\upsilon}}$, as it is clear it is parameterized by $\bm{\upsilon}$. We follow this to simplify notation.

% \cBer{A operacao $SVD(A^{T}B)$ eh $2*O(n^3)$. A operacao que vc descreveu acima, $\tr(A^{T} B B^{T} A)$ eh $4*O(n^3)$. Assim sendo, acredito que o argumento deve ser reconsiderado. Por exemplo, apesar da operacao SVD ser mais barata, aparentemente, ela nao eh tao precisa quanto a operacao de Frobenius. Essas operacoes, feitas por varias epocas, podem propagar erro que deteriore a acuracia do metodo.}

This definition of the LMSM similarity is a natural extension of ALSM similarity  (eq.~\ref{eq:lsm_classification}) that works for subspaces. In the special case of set recognition in which the input is a single vector $\bm x$ instead of a subspace, we obtain $s = \| \bm{x}\tp\bm{V}\|_2^2$.

%\subsection{Learning mutual subspace method}

% A mutual capsule layer is a function $\mathbb{G}(d,m) \rightarrow \mathbb{R}^K$ from the Grassmann manifold to a latent space of dimension $K$. It has a set of learnable parameters composed of $K$ reference subspaces as matrices $\{ \bm V_j\ \in \calli{S}(d,p) \}_{j=1}^K$. 

LMSM has a set of learnable parameters composed of $K$ reference subspaces as matrices $\{ \bm V_j\ \in \calli{S}(d,p) \}_{j=1}^K$.
It is a function $\mathbb{G}(d,m) \rightarrow \mathbb{R}^K$ from the Grassmann manifold to a latent space of dimension $K$. 

\textbf{Matching:} Given an input subspace basis $\bm X$, its matching to the reference subspaces is defined as:
\begin{equation}
    \bm{s} = 
       \begin{bmatrix}
       \tr \bm{V}_1\tp \bm{X} \bm{X}\tp \bm{V}_1 \\
       \vdots \\ % mudei de ldots para vdots.
       \tr \bm{V}_K\tp \bm{X} \bm{X}\tp \bm{V}_K \\
       \end{bmatrix}
       \in \mathbb{R}^K,
    \label{eq:G-LMSMlayer}
\end{equation}
which we write as $s_j = \tr \bm{V}_j\tp \bm{X} \bm{X}\tp \bm{V}_j$ for compactness. Each reference subspace represents a class, so that each value $s_j$ represents the match of the input subspace to the patterns in a reference subspace.

\textbf{Gradient:} The gradient of parameters and input can be obtained by differentiating~(\ref{eq:G-LMSMlayer}) and applying the chain rule, given the gradient $\dot s_j $ with respect to a loss $L$. We obtain the following parameter update:
\begin{equation}
\dot{\bm{V}_j} = (2\dot{s}_j \bm{V}_j\tp \bm{X}\bm{X}\tp )\tp= 2\dot{s}_j \bm{X}\bm{X}\tp \bm{V}_j.
\label{eq:vderiv}
\end{equation}
The input gradient is then:
\begin{equation}
\dot{\bm{X}} = 2\dot{s}_j \bm{V}_j\bm{V}_j \bm{X}\tp.
\end{equation}

\subsection{Learning mutual subspace method}

Both LMSM and G-LMSM perform matching using the similarity presented above. They differ in their learning update, on how they consider the geometrical structure. LMSM uses Euclidean updates whereas G-LMSM uses updates on the Grassmannian.
  
LMSM employs conventional stochastic gradient descent (SGD). For that, we set the reference subspaces $\bm{V}_j$ to be unconstrained parameters (not orthogonal bases), which can be updated by SGD. Then, when matching subspaces (eq.~\ref{eq:G-LMSMlayer}) we rewrite our similarity to an equivalent form as follows:
\begin{equation}
    s = \tr \bm{X}\tp \bm{V} (\bm{V}\tp \bm{V} + \epsilon \bm{I})^{-1} \bm{V}\tp \bm{X}.
    \label{eq:frobdef}
\end{equation}
$\epsilon \bm I$ is a very small-valued multiplier of the identity matrix to regularize possible computational instabilities.
We utilize the pseudoinverse of $\bm{V}_j$ to ensure that the similarity reflects the defined function of canonical angles, and is not affected by other quantities embedded in the correlation matrices of nonorthogonal bases, such as norms of vectors and their correlation.
%Using the pseudoinverse avoids the need to use the Gram-Schmidt orthogonalization, which is slow and unstable in this gradient-based learning. 
%Using the pseudoinverse allows The projection matrix of the reference subspace is constructed from an implicit orthogonal basis when computing the subspace matching.
Note that the input subspace basis $\bm{X}$ is normally is computed by PCA as a orthogonal basis.

% "SGD nonorth. G-LMSM+softmax" is its own thing
%"SGD PCA+G-LMSM+softmax" = LMSM

%%%%%%%%%%%%
\subsection{Grassmann learning subspace method}

G-LMSM performs its learning on the Grassmannian instead of the Euclidean space, updating the reference subspaces by the Riemannian stochastic gradient descent (RSGD)~\cite{bonnabel2013stochastic, becigneul2018riemannian}. This manifold aware update allows the method to keep the reference subspace bases consistently orthogonal while maintaining a stable and efficient learning, without the need of tricks such as the pseudoinverse. From a geometry perspective, the method enforces the iterates $\bm{V}_j(t)$ to always be a member of the Grassmann manifold, i.e., avoiding the parameter search line to leave the manifold. The RSGD update consists of two steps: (1) transforming the Euclidean gradient $\dot{\bm{V}}_j$ to a Riemannian gradient $\pi(\dot{ \bm V}_j)$ tangent to the Grassmannian, that is, the closest vector to $\dot{\bm{V}_j}$ that is also tangent to $\calli G$  at $\bm{V}_j$. That can be obtained by the orthogonal projection of $\dot{\bm{V}_j}$ to the horizontal space at $\bm{V}_j$ as:
\begin{equation}
    \pi(\dot{ \bm V}_j) = (\bm I  - \bm{V}_j \bm{V}_j\tp)\dot{\bm{V}_j}.
\end{equation}
The next step is (2) updating the reference subspace by the Grassmannian exponential map:

\begin{equation}
\label{eq:exp-update}
\bm{V}' = \Exp_{\bm{V}}   \lambda\dot{\bm{V}} = \orth(\bm{V}\bm{Q}(\cos \bm{\Theta} \lambda)\bm{Q}^\top + \bm{J}(\sin \bm{\Theta} \lambda)\bm{Q}^\top)
.
\end{equation}
Here, $\bm{J}\bm{\Theta} \bm{Q}^{\top} = \pi(\dot{ \bm V})$ is the compact singular value decomposition (SVD) of the gradient, and $\lambda$ is a learning rate.
This update can be regarded as a geodesic walk towards the opposite direction to the Riemannian gradient (direction of descent), landing on a more optimal point $\bm{V}'$.

%%%%%%%%%%%%%%%%%%%%%%%%%%%%%%%%

\subsection{Loss functions}

In their basic form, both LMSM and G-LMSM are trained using the cross-entropy (CE) as an objective:
First, the similarities are mapped by softmax (similarities are used as logits):
%\begin{equation}
%	\bm q_c  =  \frac{\exp(\tau s_c)}{\sum_{j=1}^{C} \exp( \tau s_j )},
%	\label{eq:softmax}
%\end{equation}
%where $\tau$ denotes a parameter called \emph{inverse-temperature}, from the softmax with temperature $T$, where $ \tau = 1/T$.
\begin{equation}
	\bm q_c  =  \frac{\exp(s_c)}{\sum_{j=1}^{C} \exp( s_j )}.
	\label{eq:softmax}
\end{equation}

Then, the CE loss can be defined as follows.
Let $\Omega:~\{ 1, \ldots, C \}~\rightarrow~\{ 0,1 \}^C$ be a function such that $\| \Omega(p) \|_1 = 1$. This function maps an integer $p$ to a \textit{one-hot} vector, i.e., a vector of zeroes where only the entry $p$-th entry is set to one. With that function we can define two vectors: the one-hot prediction vector $\bm q = \Omega(q)$ and the label vector $\bm y = \Omega(y)$. Then the CE loss follows:
\begin{equation}
	L_\text{CE}(\{ \bm V\}, \bm x, y) = - \frac{1}{\tau} \sum_{c=1}^{C} \bm y_c \log (\bm q_c).
	\label{eq:celoss}
\end{equation}

%We train G-LMSM models with the cross-entropy loss, because it is not only one of the most common losses in classification problems, but also has a simple explanation in terms of classical learning subspace methods, as will be explored in the next section.

However, the cross-entropy does not handle one possibility in the learning subspace methods. Their capacity for representation and discrimination can be impaired if the reference subspaces are highly overlapped, as most multiple activation values will tend to full activation $r$. This is more likely to happen when the class spread is very large, or the dimension $r$ is large, and $d$ is small. To tackle this possibility, we propose a repulsion loss to use in G-LMSM architectures. Let $L_{CE}(\calli H)$ be the cross-entropy loss given the set $\calli H$. The total loss is then given by $L(\calli H) = L_{CE}(\calli H) + \gamma L_{RP}(\{ \bm V_j\})$, where $\gamma$ is a control parameter and:
\begin{equation}
    L_{RP} = \frac{1}{K^2} \sum_{i=1}^K \sum_{j=1}^K (\frac{1}{r} \| \bm V_i\tp \bm V_j \|_F^2 - \delta_{ij})^2.
\end{equation}
Here, $\delta_{ij}$ is the Dirac delta. The idea of the repulsion loss is to guide the learning to mutually repel the reference subspaces, so that the class representations are unique and well separated.

\subsection{Nonlinear activation}

A possible scenario when training a G-LMSM is that, as the value $s_j$ approaches zero, the reference subspaces may offer little contribution to the model. This situation can impair some subspaces from ever contributing, or in the softmax, can lead to sparse, confident predictions that result in misclassification of borderline samples. To avoid this possibility, we can extend the G-LMSM with an elementwise nonlinear activation $\phi$ as follows:
\begin{equation}
    s_j = \phi(\tr \bm{V}_j\tp \bm{X} \bm{X}\tp \bm{V}_j).
\end{equation}

There are numerous choices of $\phi$ for G-LMSMs. In this paper, we choose to use the square root, that is, $s = \| \bm{X}\tp\bm{V}\|_F$. The main reason is that it is a more natural extension of ALSM. Another reason is that it is straightforward nonlinearity that has been used to regularize neural networks~\cite{yang2018square}. Concretely, the square root will decrease the value of very high activations with high intensity, while not decreasing as much lower activations.

%This can been seen as a type of regularization as follows. If we consider a canonical angle between a random input $\chi$ and $\upsilon$ to be a resulting random variable $\theta$, and transform its distribution through the cosine function, we can write the transformed mean by (1) $\bar{s} = \frac{1}{r} \sum_{i=1}^{r} \cos \theta_i $ and the variance as (2) $\sigma^2(\cos \theta) = \frac{1}{r} \sum_{i=1}^{r} (\cos \theta_i - \bar{s})^2$. We can rewrite (2) simply as $\sigma^2 = \frac{s^2}{m} - \bar{s}$, and thus $\bar{s} = \sqrt{\frac{s^2}{m} - \sigma^2}$. If we assume $\sigma$ to be constant among all choices of $\chi$ and $\upsilon$, we can retrieve and implicitly optimize with the metric $\bar{s}$, which corresponds to optimizing the $L_1$ norm $\| \cos \theta\|_1$.

\subsection{Softmax with temperature}

Another approach to handling overconfidence is to use softmax with \textit{temperature} $T$, used by~\cite{guo2017calibration, hinton2015distilling,liang2017enhancing}. In our framework, we utilize the \textit{inverse-temperature} hyperparameter $\tau = 1/T$ to scale the logits before applying softmax. This learned scaling has the potential to free the bounded logits ($0 \leq s_j \leq \sqrt{r}$), allowing them to achieve proper classification margin.
We can use it with our G-LMSM simply as $\exp( \tau s_j) / \sum_{j'=1}^{C}\exp( \tau s_{j'})$ . Agarwala et al.~\cite{agarwala2020temperature} offers a detailed analysis of the underpinning learning mechanisms of softmax with temperature.

% otherwise bounded by $0 \leq s_j \leq \sqrt{r}$,
The intuition of this idea is that by scaling the values of the activation we can control the level of confidence of our model when it makes predictions. When the temperature is $\tau = 1$, we compute the unscaled softmax, leading to the basic G-LMSM with softmax. When the temperature is greater than $1$ the input of softmax is a larger value. Performing softmax on larger values makes the model more confident, i.e., a smaller activation is needed to yield a high posterior probability. However, it also makes the model more conservative when looking at data, that is, it is less likely to sample from unlikely candidates. Using a temperature less than $1$ produces a less confident probability distribution over the classes, where the probabilities are more distributed over categories. The model is less conservative on data, resulting in more diverse representation. However, if overdone the model is also more prone to mistakes.

Since the temperature depends largely on the task, in our framework, we allow the temperature to be learned automatically as a network parameter. 
%%%%%%%%%%%%%%%%%%%%%%%%%%%%%%%%%%%
\section{Experiments}
\begin{figure*}[htb]
        \begin{subfigure}{.5\textwidth}
          \centering
          \includegraphics[width= 8 cm]{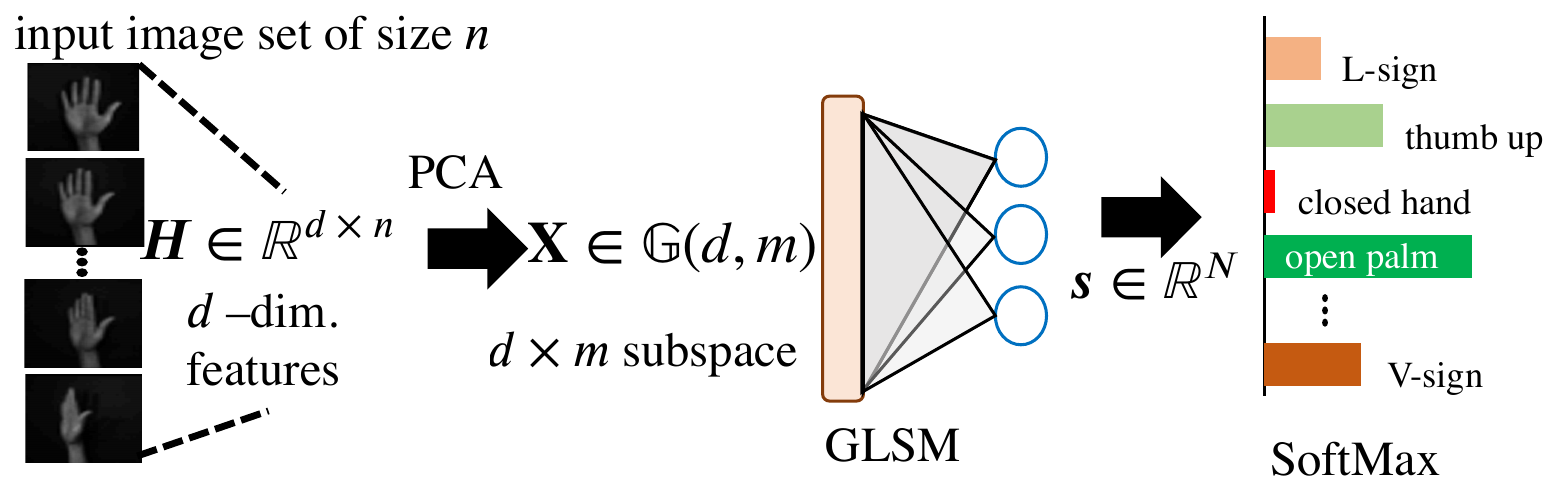}
          \caption{G-LMSM+softmax.}
          \label{fig:mcsx}
        \end{subfigure} \quad
        \begin{subfigure}{.5\textwidth}
          \centering
          \includegraphics[width= 8 cm]{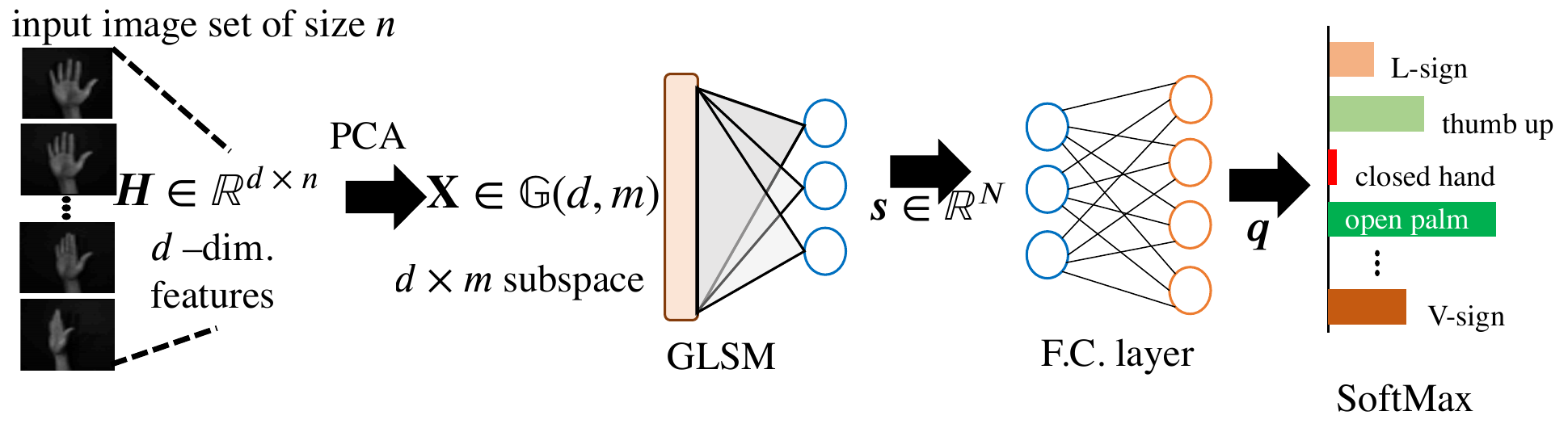}
          \caption{G-LMSM+FC.}
          \label{fig:mcfc}
        \end{subfigure}
        \begin{subfigure}{.5\textwidth}
          \centering
          \includegraphics[width= 8 cm]{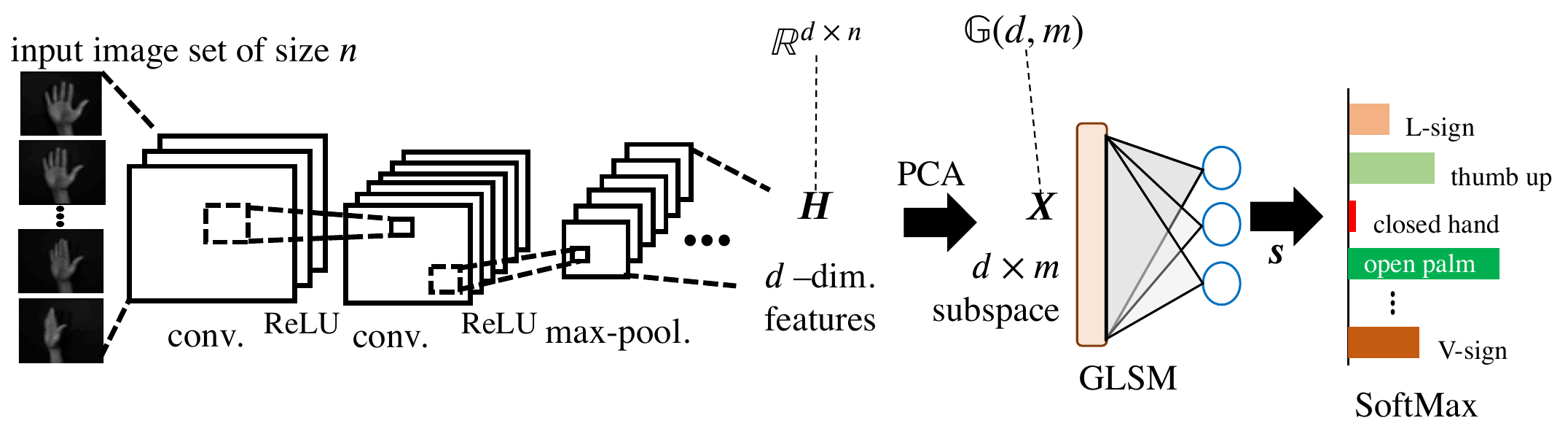}
          \caption{conv+G-LMSM+softmax.}
          \label{fig:convmcsx}
        \end{subfigure}
        \begin{subfigure}{.5\textwidth}
          \centering
          \includegraphics[width= 8 cm]{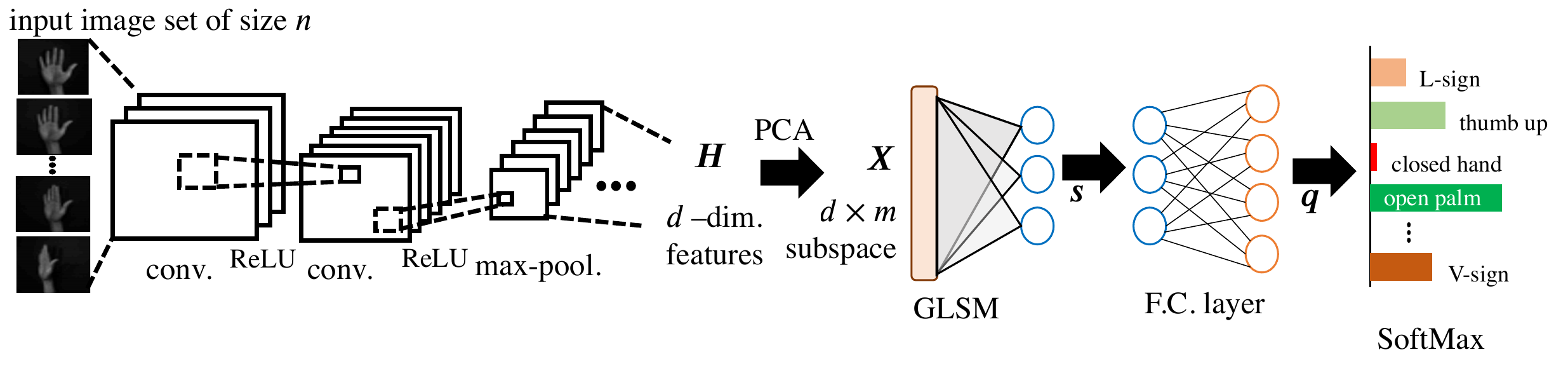}
          \caption{conv+G-LMSM+FC.}
          \label{fig:convmcfc}
        \end{subfigure}
\caption{Conceptual diagrams of the proposed G-LMSM architectures.}
\label{fig:archs}
\end{figure*}

\subsection{Network Architectures}
\label{sec:netarch}

In the following experiments, we use network architectures containing the proposed methods as a layer to approach image set recognition. The architectures  are different in two separate design choices: (1) choice of set features, for example, raw images or CNN features; and (2) function of G-LMSM in the pipeline as a classifier or as a feature extractor. From these options, we can obtain the following architectures, shown on Figure~\ref{fig:archs}.

\textbf{G-LMSM+softmax}: This is the simplest architecture, consisting of a G-LMSM layer and a softmax applied to the activation $\bm s$. In this setting, the input set is preprocessed into a subspace as follows: Let the matrix $\bm H \in \mathbb{R}^{d \times n}$ contain the set images as its columns, where $d = w*h*c$. Then, we compute a subspace from the set using noncentered PCA, i.e., $\bm H \bm H\tp = \bm U \bm \Sigma \bm U^T$. The subspace basis $\bm X \in \mathbb{R}^{d \times m}$ consists of the $m$ leftmost columns of $\bm U$, that is, the columns with the highest corresponding eigenvalues.
The subspace basis $\bm X$ is input in the G-LMSM layer, obtaining the activation $s \in \mathbb{R}^C$. Each reference subspace $\bm V_j$ represents one class, and the softmax of $s_j$ corresponds to the probability of that class given the samples. This model corresponds to CapPro in the case the input consists of a single vector $\bm x \in \mathbb{R}^{d}$.

\textbf{G-LMSM+FC}: This variation extends G-LMSM to act as a feature extractor rather than simply as a classifier, by processing $s$ further through a fully-connected layer before applying softmax. The number of reference subspaces $K$ becomes a free parameter, and the FC weights are $\bm W \in \mathbb{R}^{K \times C}$ and $b \in \mathbb{R}^C$.

\textbf{CNN backbone G-LMSM+softmax}: Given the set $\calli H$ as input, we process each image $\calli{I}_l$ through a convolutional neural network backbone $f$, obtaining features $\bm h_l = f(\calli{I}_l)$. Then, we can create $\bm H \in \mathbb{R}^{d \times n}$ with normalized $\bm h_l$ as columns. From here we either (1) perform noncentered PCA to obtain $\bm X$ or (2) approximate the subspace projection matrix by the autocorrelation matrix (AC) $\bm H \bm H\tp$. PCA is exact, but requires an SVD decomposition, which although can be processed in an end-to-end manner by utilizing the SVD gradient updates proposed by~[Townsend], can be unstable and lead to exploding gradients. The AC is not exact, as it does not yield a subspace projection matrix, but it is a reasonable approximation here since it quite fast and stable in end-to-end processing.

In this paper, we utilize two backbones: VGG4 and ResNet18 without the last stage of FC layers. These backbones act as feature extractors, while PCA combines the features into a subspace and a G-LMSM in the last layer acts as a classifier.

\textbf{CNN backbone G-LMSM+FC}: This corresponds to extracting CNN features from the set images, and then apply G-LMSM+FC, where G-LMSM here functions as a feature extractor.

\subsection{Ablation study on Hand Shape Recognition}

We conducted an ablation study experiment of Hand shape recognition with the Tsukuba hand shape dataset. The objective is twofold: (1) show that the proposed methods can classify objects by achieving a meaningful result, and (2) understand the effects of subspace dimensions and G-LMSM's function as a classifier and as a feature extractor.

\begin{figure}[t]
\centering
\includegraphics[width=\linewidth]{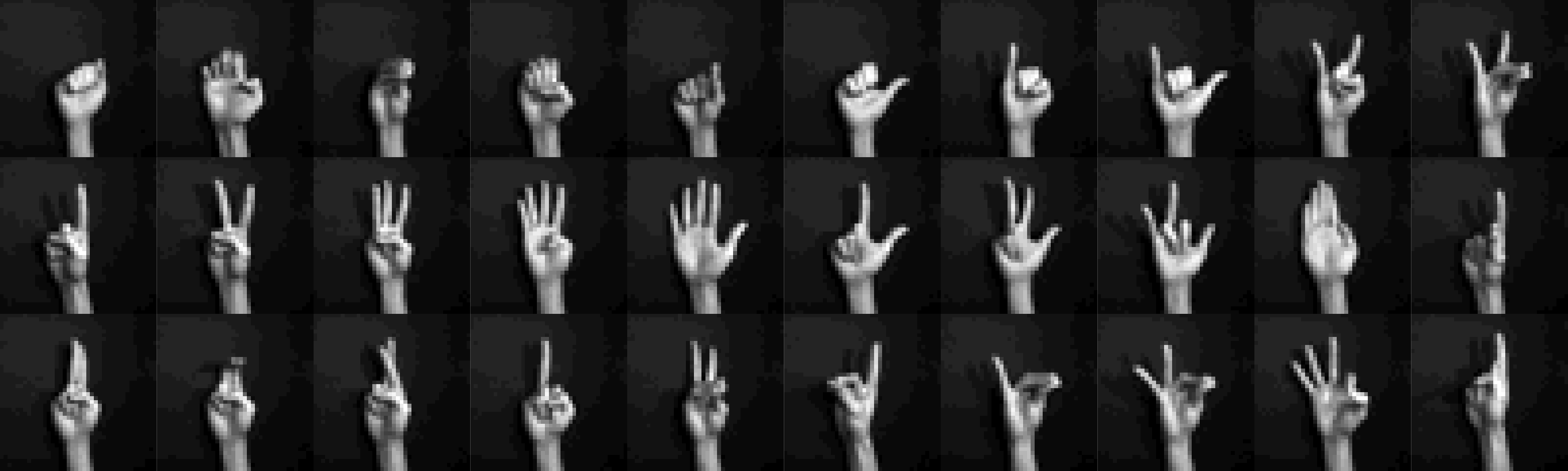}
\caption{Sample images from the Tsukuba hand shape dataset, displaying the $30$ classes of hand shapes.}
\label{fig:samplehand}
\end{figure}

\textbf{Dataset}:
This dataset contains $30$ hand classes $\times$ 100 subjects, each of which  contains 210 hand shape images, consisting of $30$ frames $\times$ 7 different viewpoints. Example images can be seen in Fig.~\ref{fig:samplehand}. For each subject, we randomly created 6 sets with 5 frames from each viewpoint, so that each subject has 6 image sets of 35 images. In summary, there are a total of 18000 image sets in this dataset, each set containing image information from 7 camera viewpoints. In the experiments, all the images were resized to $24 \times 24$ pixels.

% \cLin{explain the deep settings such as batch size, learning rate, optimizer, etc.} XYUVQ
\textbf{Settings}: We evaluated the performance of the proposed G-LMSM in the classification problem of $30$ types of hand shapes. We used the image sets of $70$ subjects as training sets, holding a subset of $15$ subjects for validation. The remaining $15$ subjects were used as testing sets.

\subsubsection{Effect of Optimization algorithm}
%
%----------------------------------------

% Table generated by Excel2LaTeX from sheet 'Hand shape'
\begin{table*}[]
  \centering
  \caption{Results of the experiment on the Tsukuba hand shape dataset.}
    \begin{tabular}{l l l r}
    \toprule
    Feature &       & Method & \multicolumn{1}{l}{Accuracy (\%)} \\
    \midrule
    \multirow{5}[4]{*}{No backbone} & Baselines & MSM   & 67.10 \\
\cmidrule{2-4}          & \multirow{4}[2]{*}{Proposed} & G-LMSM+softmax & 68.87 \\
          &       & G-LMSM+FC & 89.98 \\
          &       & G-LMSM+FC (repulsion) & 89.52 \\
          &       & G-LMSM+sqrt+FC & 89.69 \\
    \midrule
    \multirow{8}[4]{*}{With backbone (VGG4)} & \multirow{4}[2]{*}{Baselines} & Set average & 97.54 \\
          &       & Bilinear & 83.81 \\
          &       & Softmax average & 97.48 \\
          &       & CapPro average & 96.80 \\
          &       & Nonorth. matching & 96.80 \\
\cmidrule{2-4}          & \multirow{4}[2]{*}{Proposed} & LMSM+softmax & 98.72 \\
          &       & PCA+G-LMSM+softmax & 99.39 \\
          &       & PCA+G-LMSM+FC & 99.06 \\
    \bottomrule
    \end{tabular}%
  \label{tab:exp_hand}
\end{table*}%

First we compare LMSM and G-LMSM to evaluate the choice of optimization algorithm, conventional SGD or Riemannian SGD.
As a baseline for comparison, we also evaluate a variation of LMSM where for the input set representation $\bm{X}$ we use the normalized features themselves as a basis, i.e., $\bm H = \bm{X}$, yielding the method dubbed "nonorth. matching". Note that in this case, the similarity might not correspond to the canonical angles between $\sspan(\bm H)$ and $\sspan(\bm V)$.

% SGD nonorth. G-LMSM+softmax" => Nonorth. matching
% SGD PCA G-LMSM+softmax" => LMSM

% \cLin{discuss results.}
Table~\ref{tab:exp_hand} shows the results. For now, we focus only on comparing the proposed G-LMSM with the LMSM when equipped with a VGG4 backbone learned from random initialization. As shown, LMSM can outperform nonorth. matching, indicating that the PCA input subspace representation is effective in this task.
Next, "PCA+G-LMSM+softmax" can outperform the LMSM, indicating that the learning on the Grassmannian yields better accuracy than learning on Euclidean space. The results are evidence that both the orthogonality constraint and the manifold optimization seem to play an important role in learning a subspace.

\subsubsection{Performance across parameter settings}

\begin{figure}[tb]
\centering
\includegraphics[width= 6 cm]{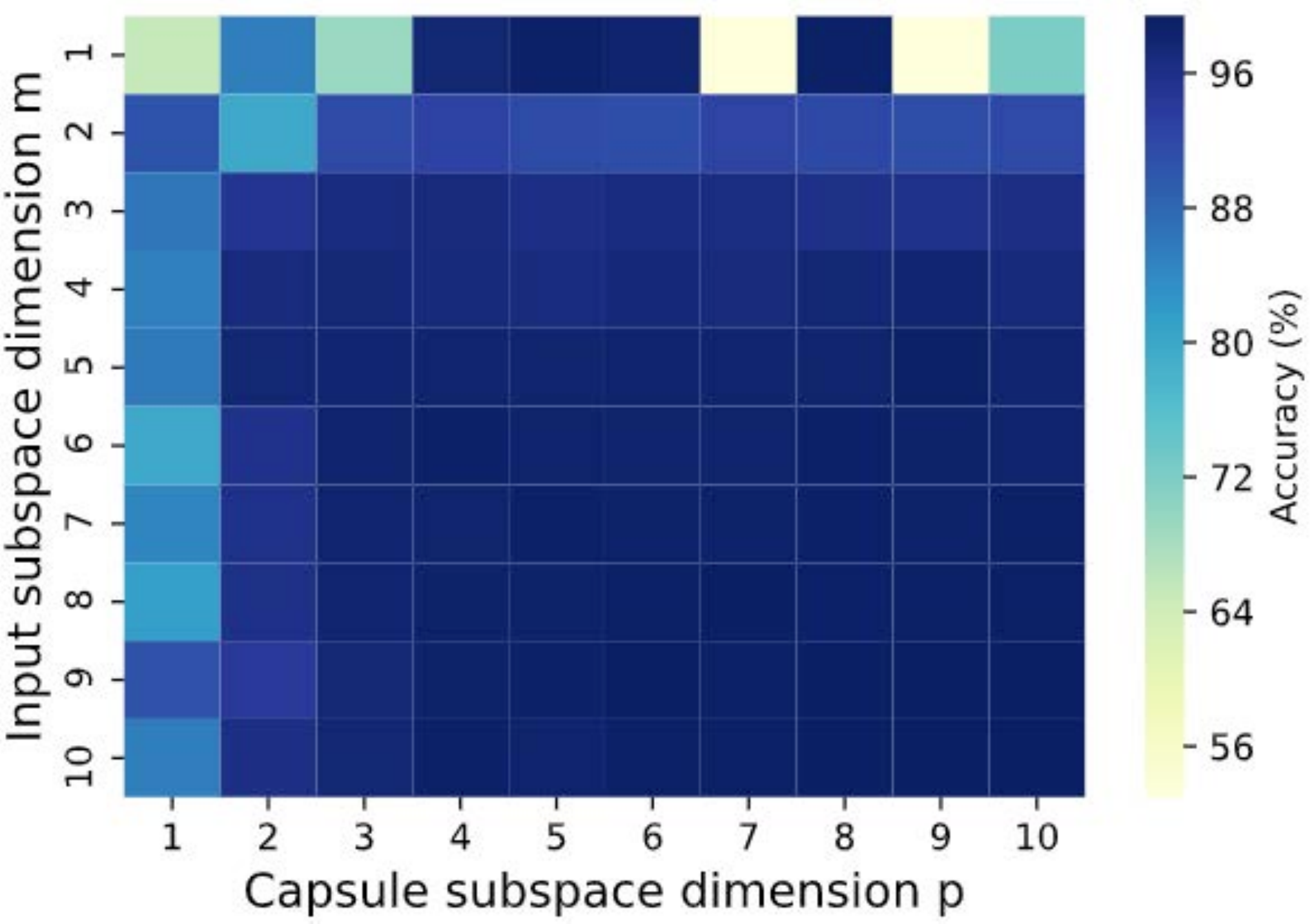}
\caption{Performance of G-LMSM across parameter settings, varying the dimension of capsule subspaces $p$ and input subspaces $m$. }
\label{fig:param_behavior}
\end{figure}
Figure~\ref{fig:param_behavior} shows the accuracies of G-LMSM+softmax when we vary the dimensions of subspaces, which are hyperparameters in our model. Both the dimension of reference subspaces $p$ and the dimension of input subspace $m$ are separately varied from $1$ to $10$. It can be inferred from the figure that G-LMSM+softmax has a lower performance from $1$ to around $3$ in both parameters because such low dimension subspaces are not enough to model the complexity of entities such as hand shapes, containing multiple poses, sizes, deformations and each subject's individual hand characteristics. When the dimension is $1$, the G-LMSM+softmax can be regarded as a CapPro, which as seen on the figure, yields a poor performance. Increasing the dimension of the input subspace improves the performance until about $m=8$ and $p=7$. It can be expected that a high value of $p$ contributes to overfitting. However, overall, it appears that the performance is not so sensitive to the value of hyperparameters if they are selected in a reasonable range, such as $5$ to $10$.

\subsubsection{Comparison against multiple baselines}

We compare G-LMSM to multiple baselines that do not contain its key features to evaluate its effectiveness. 

\textbf{Baselines}: There are five baselines in this experiment: (1) the mutual subspace method (MSM)~\cite{fukui2015difference}, a fundamental method of classifying subspaces with canonical angles. (2) "set average", a very simple idea to test the effectiveness of the backbone network. The set images are processed through the backbone VGG4, yielding a matrix of output features $\bm H \in \mathbb{R}^{512 \times 35}$, where the backbone features are of size $512$ and the number of images is $35$. Then the set images are averaged, which can be regarded as a setwise global average pooling, resulting in a single $512$ vector $\bm h$, which is then processed through a FC layer and softmax. (3) Softmax average, which processes each image through a FC layer and softmax and then averages the output probabilities. (4) Bilinear~\cite{gao2016compact}, which uses the features correlation matrix $\bm H \bm H\tp $ as a single integrated feature, and classifies it through an FC layer. And (5) CapPro average, where the average vector $\bm h$ is used as input to a CapPro layer.

\textbf{Results and discussion}:
Table~\ref{tab:exp_hand} shows the results. The first results, with no backbone convolutional layers, suggest two points: G-LMSM-based models can classify hand shapes more accurately than MSM, even when it is used on its own, i.e., just the G-LMSM layer as a standalone classifier; and the G-LMSM acting as a feature extractor followed by an FC classifier offers a much better performance. Neither the square root activation nor the repulsion loss seems to offer a gain of accuracy in this case.

The second part of Table~\ref{tab:exp_hand} shows methods that perform end-to-end learning with a VGG4 backbone. These results demonstrate two points: (1) the average feature vector of a set is a naive approach to treating image sets and that more information can be modeled from the features distribution; and (2) interestingly, the G-LMSM in a deep backbone network seems to perform better as a classifier than as a feature extractor. The reason might be that either the FC layer followed G-LMSM is prone to overfitting or that, in this kind of architecture, subspaces are better at representing classes directly rather than more abstract entities.
% \cKob{why the G-LMSM in a deep backbone network seems to perform better as a classifier than as a feature extractor?}

%%%%%%%%%%%%%%%%%%%%%%%%%%%%%%%%%%%%%
\subsection{Experiments on Emotion Recognition}

% Table generated by Excel2LaTeX from sheet 'AFEW'
\begin{table}[htbp]
  \centering
  \caption{Results of the experiment on the AFEW dataset.}
    \begin{tabular}{lr}
    \toprule
    Method & \multicolumn{1}{l}{Accuracy (\%)} \\
    \midrule
    STM-ExpLet & 31.73 \\
    RSR-SPDML & 30.12 \\
    DCC   & 25.78 \\
    GDA   & 29.11 \\
    GGDA  & 29.45 \\
    PML   & 28.98 \\
    DeepO2P & 28.54 \\
    SPDNet & 34.23 \\
    GrNet & 34.23 \\
    %Log Model & 32.61 \\
    G-LMSM+FC & 38.27 \\
    % G-LMSM+FC+drop & 38.54 \\
    \bottomrule
    \end{tabular}%
  \label{tab:exp_afew}%
\end{table}%

%%%%%%%%%%%% TABLE WITH MORE METHODS SQRT, TEMP SOFTMAX, REP LOSS
\begin{table*}[ht]
\centering
  \caption{Results of the experiment on the YouTube Celebrities dataset.}
\begin{tabular}{llll} 
\hline
                           & Type                                            & Method                & Accuracy (\%)  \\ 
\hline
\multirow{8}{*}{Baselines} & \multirow{4}{*}{Classic}                        & DCC                   & 51.42 $\pm$ 4.95     \\
                           &                                                 & MMD                   & 54.04 $\pm$ 3.69     \\
                           &                                                 & CHISD                 & 60.42 $\pm$ 5.95     \\
                           &                                                 & PLRC                  & 61.28 $\pm$ 6.37     \\ 
\cline{2-4}
                           & \multirow{4}{*}{Deep}                           & DRM                   & 66.45 $\pm$ 5.07     \\
                           &                                                 & Resnet50 vote         & 64.18 $\pm$ 2.20     \\
                           &                                                 & Resnet18 set average  & 68.24 $\pm$ 3.32     \\
                           &                                                 & Resnet18 bilinear     & 67.85 $\pm$ 3.35     \\ 
\hline
\multirow{6}{*}{Proposed}  & Features (from ResNet50)                        & G-LMSM+FC              & 66.17 $\pm$ 4.19     \\ 
\cline{2-4}
                           & \multirow{5}{*}{End-to-end (backbone ResNet18)} & PCA+G-LMSM+softmax     & 68.80 $\pm$ 3.30     \\
                           &                                                 & AC+G-LMSM+softmax      & 68.93 $\pm$ 3.97     \\
                           &                                                 & AC+G-LMSM+sqrt         & 71.09 $\pm$ 3.62     \\
                           &                                                 & AC+G-LMSM (repulsion)    & 70.01 $\pm$ 2.87     \\
                           &                                                 & AC+G-LMSM+temp-softmax & 69.92 $\pm$ 3.79     \\
\hline
\end{tabular}
\label{tab:exp_ytc}
\end{table*}

We conducted an experiment on the task of emotion recognition to demonstrate the effectiveness of the proposed G-LMSM against various manifold and subspace-based methods that have been used in this task.

\textbf{Dataset}: We utilize the Acted Facial Expression in Wild (AFEW)~\cite{dhall2014emotion} dataset. The dataset contains $1,345$ sequences of $7$ types of facial expressions acted by $330$ actors in close to real-world settings.

% \cKob{
% "1747 small subvideos augmenting the numbers of sets"
% how?
% Is this a standard protocol provided in the referenced papers?}
\textbf{Settings}:
We follow the experiment protocol established by~\cite{liu2014learning,huang2017riemannian} to present the results on the validation set. The training videos are split into $1747$ small subvideos augmenting the numbers of sets, as established by~\cite{huang_building_2016}. For the evaluation, each facial frame is normalized to an image of size $20 \times 20$. For representation, following various works~\cite{liu2013partial,liu2014learning,huang2018building}, we represent the sequences of facial expressions with linear subspaces of dimension $10$, which exist on a Grassmann manifold $\mathbb{G}(400, 10)$.

\textbf{Baselines}: As the G-LMSM is a network layer that learns subspaces based on iterative Riemannian optimization, we compare it against the following methods: (1) a regular CNN that can handle image sets, namely the Deep Second-order Pooling (DeepO2P)~\cite{ionescu2015training}. (2) Methods based on subspace without Riemannian optimization: DCC~\cite{kim2007discriminative}, Grassmann Discriminant Analysis (GDA~\cite{hamm2008grassmann} and Grassmannian Graph-Embedding Discriminant Analysis (GGDA)~\cite{hamm2009extended}. And (3) methods based on Riemannian optimization: Projection Metric Learning (PML)~\cite{huang2015projection}, Expressionlets on Spatio-Temporal Manifold (STMExpLet)~\cite{liu2014learning}, Riemannian Sparse Representation combining with Manifold Learning on the manifold of SPD matrices (RSR-SPDML)~\cite{harandi2014manifold}, Network on SPD manifolds (SPDNet)~\cite{huang2017riemannian} and Grassmann net (GrNet)~\cite{huang2018building}. Especially, GrNet proposes a block of manifold layers for subspace data. GrNet-1 denotes the architecture with 1 block and GrNet-2 with 2 blocks.

\textbf{Results and discussion}:
The results can be seen in Table~\ref{tab:exp_afew}. The proposed method achieved quite competitive results compared to the manifold-based methods, by mapping the subspace into a vector and then uses simple Euclidean operations such as fully-connected layers and cross-entropy loss, in contrast to several methods that use complex approaches requiring multiple matrix decompositions.
First, G-LMSM+softmax outperforms popular methods such as GDA and GGDA, which have a similar purpose of mapping Grassmann manifold data into a vector representation. The reason is perhaps that G-LMSM learns both the representation and discrimination in an end-to-end manner, while GDA uses a kernel function to represent subspaces and learns the discriminant independently.
Methods such as SPDNet and GrNet are composed of many complex layers involving SVD, QR decompositions, and Gram–Schmidt orthogonalization and its derivatives are utilized as well. They increase in complexity as the number of layers increases by repeating these operations, which are not easily scalable to use in GPUs. On the other hand, the proposed method provides competitive results with fewer layers and no decompositions, making it naturally parallelizable and scalable.

%%%%%%%%%%%%%%%%%%%%%%%%%%%%%%%%%%%%%
\subsection{Experiment on Face Identification}

We conducted an experiment of face identification with the YouTube Celebrities (YTC) dataset.
\textbf{Dataset}: The YTC dataset~\cite{ytc} contains $1910$ videos of $47$ identities.
Similarly to \cite{reconst}, as an image set, we used a set of face images extracted from a video by the Incremental Learning Tracker~\cite{ilt}.
All the extracted face images were scaled to 30 $\times$ 30 pixels and converted to grayscale.

\textbf{Settings}: Three videos per each person were randomly selected as training data, and six videos per each person were randomly selected as test data.
We repeated the above procedure five times and measured the average accuracy.

\textbf{Baselines}:
(1) Classic image set-based methods: Discriminative Canonical Correlations (DCC)~\cite{kim2007discriminative}, manifold-manifold distance (MMD)~\cite{wang2008manifold}, Convex Hull-based Image Set Distance (CHISD)~\cite{cevikalp2010face}, Pairwise linear regression classification (PLRC)~\cite{feng2016pairwise}. (2) Deep learning methods: deep reconstruction model (DRM)~\cite{shah2017efficient};
Resnet vote~\cite{resnet} is a baseline consisting of a Resnet50~\cite{resnet} fine-tuned to this dataset with the cross-entropy loss in a single image setting. For the fine-tuning, we added two fully connected (FC) layers after the last global average pooling layer in the network. The first FC layer outputs a 1024 dimension vector through the ReLU~\cite{relu} function, and the second layer outputs a 47 (the number of classes) dimension vector through the softmax function. Hyperparameters of the optimizer were used as suggested by the original paper. For classification, each image of a set is classified independently and a majority voting strategy of all predictions is used to select a single class prediction for the whole set. The model called Resnet18 set average is the same as used in the hand shape experiment. The Resnet18 bilinear model uses the correlation matrix of the backbone features of the image set, generating a $512 \times 512$ matrix. The vectorized matrix is processed through an FC layer and softmax for classification.

\textbf{Results and discussion}:
Table~\ref{tab:exp_ytc} shows the results. G-LMSM outperforms not only the classical manifold methods, but also various deep methods. Although G-LMSM uses the same backbone as some methods, it still can achieve better results. In the case of Resnet50 vote and the G-LMSM+FC, the key difference is that voting is a simple heuristic approach to image set recognition, as the model is not aware of the set as a whole. This leads to unstable and diverging predictions on a set's class, as it does not take into account the underlying set distribution. The voting method is not differentiable by nature, so it is not trivial to extend this idea to an end-to-end approach. Then, set average seems again to be a naive approach to treating image sets and that more information can be modeled from the features distribution. The bilinear model could in theory capture the underlying set information from the correlations, but the high dimensionality of the correlation matrix makes it almost impossible to learn a FC classifier without overfitting. 

These results confirm the advantage of G-LMSM over these methods using the same backbone: mutual subspaces provide a low-dimensional, differentiable approach to modeling the set distribution, ultimately leading to a straightforward integration of the deep convolutional networks and the problem of image sets.

Furthermore, the AC+G-LMSM+softmax seems to yield the same result as PCA+G-LMSM+softmax, evidence that it is a good approximation for processing the input data. Then, the proposed variations of G-LMSM, namely, repulsion loss, the square root activation and the temperature softmax (temp-softmax) appear to provide an advantage over the base G-LMSM.  
``	%%%%%%%%%%%%%%%%%%%%%%%%%%%%%%%%%%%
\section{Conclusion}

In this paper, we addressed the problem of image set recognition. We proposed two subspace-based methods, named learning mutual subspace method (LMSM) and Grassmannian learning mutual subspace method (G-LMSM), that can recognize image sets with DNN as feature extractors in an end-to-end fashion. The proposed methods generalize the classic average learning subspace method (ALSM), in that LMSM and G-LMSM can handle an image set as input and perform subspace matching. The key idea of G-LMSM is to learn the subspaces with Riemannian stochastic gradient descent on the Grassmann manifold, ensuring each parameter is always an orthogonal basis of a subspace.

Extensive experiments on hand shape recognition, face identification, and facial emotion recognition showed that LMSM and G-LMSM have a high discriminant ability in image set recognition. It can be used easily as a standalone method or within larger DNN frameworks. The experiments also reveal that Riemannian optimization is effective in learning G-LMSM within neural networks.

%%%%%%%%%%%%%%%%%%

% For peer review papers, you can put extra information on the cover
% page as needed:
% \ifCLASSOPTIONpeerreview
% \begin{center} \bfseries EDICS Category: 3-BBND \end{center}
% \fi
%
% For peerreview papers, this IEEEtran command inserts a page break and
% creates the second title. It will be ignored for other modes.
\IEEEpeerreviewmaketitle

\ifCLASSOPTIONcaptionsoff
  \newpage
\fi

% trigger a \newpage just before the given reference
% number - used to balance the columns on the last page
% adjust value as needed - may need to be readjusted if
% the document is modified later
%\IEEEtriggeratref{8}
% The "triggered" command can be changed if desired:
%\IEEEtriggercmd{\enlargethispage{-5in}}

% references section

% can use a bibliography generated by BibTeX as a .bbl file
% BibTeX documentation can be easily obtained at:
% http://mirror.ctan.org/biblio/bibtex/contrib/doc/
% The IEEEtran BibTeX style support page is at:
% http://www.michaelshell.org/tex/ieeetran/bibtex/
%\bibliographystyle{IEEEtran}
% argument is your BibTeX string definitions and bibliography database(s)
%\bibliography{IEEEabrv,../bib/paper}
%
% <OR> manually copy in the resultant .bbl file
% set second argument of \begin to the number of references
% (used to reserve space for the reference number labels box)
% \begin{thebibliography}{1}

% \bibitem{IEEEhowto:kopka}
% H.~Kopka and P.~W. Daly, \emph{A Guide to \LaTeX}, 3rd~ed.\hskip 1em plus
%   0.5em minus 0.4em\relax Harlow, England: Addison-Wesley, 1999.

% \end{thebibliography}

% \bibliography{bibtex/bib/IEEEabrv.bib,bibtex/bib/IEEEexample.bib}{}
\bibliography{main.bib}{}
\bibliographystyle{IEEEtran}

% that's all folks
\end{document}